\def\BibTeX{{\rm B\kern-.05em{\sc i\kern-.025em b}\kern-.08em
    T\kern-.1667em\lower.7ex\hbox{E}\kern-.125emX}}
\newtheorem{theorem}{Theorem}[section]
\newtheorem{definition}[theorem]{Definition}
\title{Causal SHAP: Feature Attribution with Dependency Awareness through Causal Discovery}
\author{\IEEEauthorblockN{Woon Yee Ng\IEEEauthorrefmark{1},
Li Rong Wang\IEEEauthorrefmark{1}, 
Siyuan Liu\IEEEauthorrefmark{1}, and
Xiuyi Fan\IEEEauthorrefmark{2}\IEEEauthorrefmark{1}}
\IEEEauthorblockA{
\IEEEauthorrefmark{1}College of Computing and Data Science,\\
\IEEEauthorrefmark{2}Lee Kong Chian School of Medicine,\\
Nanyang Technological University, Singapore}}
\begin{document}

\maketitle

\begin{abstract}
Explaining machine learning (ML) predictions has become crucial as ML
models are increasingly deployed in high-stakes domains such as
healthcare. While {\em SHapley Additive exPlanations (SHAP)} is widely
used for model interpretability, it fails to differentiate between
causality and correlation, often misattributing feature importance
when features are highly correlated. We propose {\em Causal SHAP}, a
novel framework that integrates causal relationships into feature
attribution while preserving many desirable properties of SHAP. By
combining the {\em Peter-Clark (PC)} algorithm for causal discovery and
the {\em Intervention Calculus when the DAG is Absent (IDA)} algorithm
for causal strength quantification, our approach addresses the
weakness of SHAP. Specifically, Causal SHAP reduces
attribution scores for features that are merely correlated with the
target, as validated through experiments on both synthetic and
real-world datasets. This study contributes to the field of
Explainable AI (XAI) by providing a practical framework for
causal-aware model explanations. Our approach is particularly valuable
in domains such as healthcare, where understanding true causal
relationships is critical for informed decision-making. 
\end{abstract}

\section{Introduction}
As machine learning models become more sophisticated, their complexity creates a black-box conundrum~\cite{vilone2021notions}, where the lack of transparency undermines trust. As models are deployed in high-stakes decision-making tasks, such as autonomous driving and medical diagnosis, and face new regulatory requirements like the EU AI Act, the need for interpretable AI has become essential—not only for legal compliance but also for ensuring alignment with societal standards.

One of the most widely used machine learning interpretability methods is SHapley Additive exPlanations (SHAP)~\cite{lundberg2017unified}. SHAP is inspired by Shapley values \cite{shapley1953value}, a game theory concept that quantifies each player's contribution in a cooperative setting. A key limitation of SHAP and its common variations is their inability to capture a fundamental aspect of real-world systems: causality \cite{frye2020asymmetric}. While SHAP makes the correlations learned by predictive ML models transparent, it disregards causal relationships among model inputs. This limitation arises from its assumption of feature independence, which can lead to misattributed feature importance \cite{aas2021explaining} and the generation of impossible feature combinations \cite{hooker2019please}.

For example, in Figure~\ref{fig:open-exp}, SHAP (left) assigns importance based on correlations, suggesting that all features have direct edges to lung cancer, implying that they all directly contribute to the ``lung cancer'' prediction. However, domain knowledge (right) reveals that not all features have a direct path to lung cancer and that some features are dependent on others. As a result, SHAP misattributes importance and fails to account for causal relationships between features. This limitation is particularly problematic in healthcare, where incorrect feature attribution could lead to misguided therapeutic decisions.

\begin{figure}[H]
    \centering
    \includegraphics[width=1\linewidth]{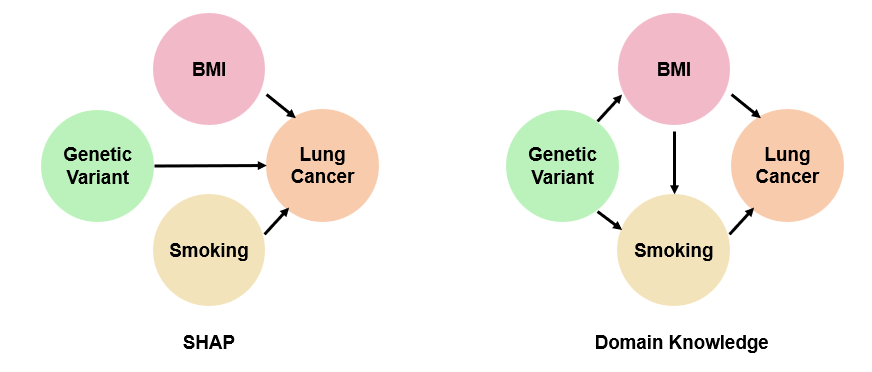}
    \caption{On the left, we see the standard SHAP calculation of feature attribution. All three cancer risk factors, BMI, Genetic Variant, and Smoking are considered equally. However, studies have shown that Smoking and BMI both act as mediators in the causal graph \cite{zhou2021causal}. Thus, to correctly quantify their importance, we must account for these causal relationships in feature attribution.}
    \label{fig:open-exp}
\end{figure}

Recent approaches have attempted to address this limitation by integrating causality into SHAP, including On-Manifold Shapley Values \cite{frye2020shapley}, Asymmetric Shapley Values (ASV) \cite{frye2020asymmetric}, Causal Shapley Values \cite{heskes2020causal}, and Shapley Flow \cite{wang2021shapley}. However, these methods still face challenges in efficiently handling feature dependencies and causality. They either require a complete causal graph for the algorithms to function or suffer from exponentially increasing computational complexity as the number of features grows.

In this paper, we propose a novel Causal SHAP framework that explains how each feature in the dataset contributes to the model's prediction while respecting the causal relationships within the data. Specifically, our approach integrates causal relationships into SHAP calculations by employing the Peter-Clark (PC) algorithm \cite{spirtes2001causation} for causal edge discovery and the Intervention Calculus when the DAG is Absent (IDA) algorithm \cite{maathuis2010predicting} for causal strength quantification. The PC algorithm is a constraint-based method that uses statistical tests to uncover causal relationships in data \cite{spirtes2001causation}. IDA builds upon the directed acyclic graphs (DAGs) inferred by PC to estimate causal effects using intervention calculus \cite{maathuis2010predicting}. This combination allows us to model how features influence one another through directed causal paths, rather than treating all correlations equally. Both PC and IDA are well-established and widely used techniques in causal discovery.

To validate the effectiveness of our approach, we evaluated Causal SHAP on both synthetic data with known causal structures and real-world datasets. We demonstrate that our method not only provides more accurate feature attributions but also better reflects the causal structure of the underlying datasets. Extensive experiments were conducted to highlight the advantages of our method.

In summary, the contributions of this paper are as follows:
\begin{enumerate}
\item A novel framework for integrating causality into SHAP using the PC algorithm for causal discovery between features and the IDA algorithm for quantifying causal strength.
\item Theoretical results showing that the proposed approach preserves the three key desired properties of SHAP: local accuracy, missingness, and consistency.
\item Experimental validation of Causal SHAP's effectiveness through the insertion test and the RMSE between the computed SHAP scores and the ground truth.
\end{enumerate}

\section{Related Work}
Recent work has attempted to integrate causality into SHAP through methods such as On Manifold Shapley Values \cite{frye2020shapley}, Asymmetric Shapley Values (ASV) \cite{frye2020asymmetric}, Causal Shapley values \cite{heskes2020causal} and Shapley Flow \cite{wang2021shapley}. On Manifold Shapley Values focuses on local explanation and attempts to provide more accurate explanations when features are dependent on each other. It does this by modifying the Shapley value function to condition out-of-coalition features on in-coalition features. However, the computational time scales exponentially as the number of features increases. Causal Shapley Values \cite{heskes2020causal} proposed a node-based approach that incorporates Pearl's do calculus to split attribution between parents and children in the a priori causal graph, allowing the method to capture both direct and indirect effects, but requires the complete causal graph to implement. When only partial causal information is available, Asymmetric Shapley Values (ASV) \cite{frye2020asymmetric} provided a theoretical framework that relaxes the Axiom of Symmetry to allow non-uniform weighting of feature importance to integrate causality into model explainability, therefore providing a more nuanced explanation that can prioritise causal ancestor and descendant. Shapley Flow \cite{wang2021shapley} further advanced this by assigning attribution to edges instead of nodes, and unified Shapley value based methods including ASV. 

While these methods address the causal aspect, another key challenge is handling feature dependencies in practice. Popular implementations of SHAP variants (KernelSHAP \cite{lundberg2017unified}, Interventional TreeSHAP \cite{lundberg2020local}, DeepSHAP \cite{lundberg2017unified}) assume feature independence for computational efficiency. Although TreeSHAP offers a path-dependent variant that can handle dependencies, it is limited to tree-based models and produces biased estimates \cite{amoukou2022accurate}. To address feature dependencies, several approximation methods have been proposed, including approaches using multivariate Gaussian distributions with Kernel Estimation \cite{janzing2020feature}, as well as methods based on Gaussian distributions, Gaussian copula distributions, or empirical distributions \cite{aas2021explaining} to estimate conditional expectations.

However, as these frameworks are built on Shapley Values, they face exponential complexity as the number of features increases \cite{michalak2013efficient}. Hence, various approximation techniques have been proposed, including sampling-based approaches and model-specific optimizations \cite{lundberg2017unified}. Despite all these techniques, the efficient handling of both causality and feature dependencies remains an open challenge.

Existing causal attribution methods like ASV, Causal Shapley Values, and Shapley Flow typically assume access to a causal graph from domain experts, while recent advances in causal discovery offer promising alternatives for automatically inferring these relationships. However, these methods have not been integrated with SHAP frameworks.
\section{Background}
In this section, we briefly review key concepts used in this work, namely, Shapley values from cooperative game theory, SHAP for model explanations, Peter-Clark (PC) algorithm for causal discovery, and a causal effect estimation method named ``Intervention calculus when the DAG is absent'' (IDA).
\paragraph{Shapley Values}
In cooperative game theory, the Shapley value of a player $i$ represents their contribution to the payout and is computed as the weighted sum over all possible feature value combinations:
\begin{equation}
    \phi_i = \sum_{S \subseteq N \setminus \{i\}} \frac{|S|!(n-|S|-1)!}{n!}[v(S \cup \{i\}) - v(S)].
\end{equation}
Here, $\phi_i$ represents the contribution of player $i$ to the prediction; $N$ is the set of all players; $n$ is the number of players; $S$ is a subset of players excluding player $i$; $v(S)$ is the value function defined as
\begin{equation}
  \label{eqn:base_value}
  v(S) = E\left[f(X)|X_S = x_S \right],
\end{equation}
where $f$ is the predictive model; $X$ is the vector of input features (players); and $X_S = x_S$ means we are conditioning on knowing the value in the subset of players $S$ for this expectation calculation. 

\paragraph{SHAP}
SHAP is a method used to explain how each feature contributes to a
prediction in a prediction model, using Shapley values. Although SHAP
is model-agnostic, it makes the assumption of feature independence
\cite{aas2021explaining}, which is rare in real-world data sets. The
SHAP documentation states that the interventional expectation  
\begin{equation}
  v(S) = E[f(X)|do(X_S = x_S)]
  \label{eqn:SHAP_Interventional}
\end{equation} 
is used to approximate the conditional expectation given in
Eqn.~\ref{eqn:base_value}, while computing SHAP values. The
do-operator, $do(X_S)$, introduced by Pearl \cite{pearl2000models},
represents an intervention that sets variables $X_S$ to specific
values $x_S$ while ignoring any possible influence of the set of
players $N \setminus S$ on the set of interventions. This approximation is
used for computational efficiency, as computing the true conditional
expectation would require estimating complex conditional distributions
and would scale exponentially with the number of features. 

\paragraph{Peter-Clark (PC)}
PC is a widely used constraint-based causal discovery
algorithm \cite{spirtes2001causation} that constructs a causal graph
in two phases: (1) initialization of the graph skeleton and (2) identification of the edge orientation, as illustrated in Figure~\ref{fig:PC_illustration}.
\begin{figure}[H]
    \centering
    \includegraphics[width=0.75\linewidth]{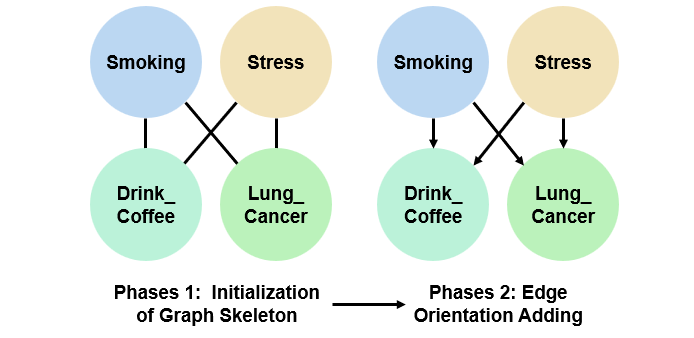}
    \caption{An example of two phases in PC algorithm}
    \label{fig:PC_illustration}
\end{figure}
Starting with a fully connected undirected graph (assuming full
interactions between all features), phase 1 systematically removes
edges between conditionally independent pairs based on statistical
tests, i.e., for each pair of variables found to be conditionally
independent given a set of other variables (called the separating
set), the edge between the pair is removed. These tests are performed
with increasingly large conditioning sets.
%, with separating sets recorded for later use.
The output of the first phase is a graph skeleton that shows causal
relationships between pairs of variables. In the second phase, PC
first identifies v-colliders \cite{spirtes2001causation} where
non-adjacent variables share a  common neighbor not in their
separating set, then applies orientation rules to form a completed
partially directed acyclic graph (CPDAG).

\paragraph{IDA}
Using the CPDAG computed with PC, IDA finds the parents of each
variable and estimates the causal effects between all pairs of parents and children. As multiple DAGs can be extracted from a CPDAG, IDA uses
Pearl’s do-calculus to obtain a multi-set of possible causal effects,
as illustrated in Fig.~\ref{fig:IDA}. For each DAG, IDA
estimates non-negative edge weights representing causal strength using regression for all edges \cite{kalisch2012causal}. 

\begin{figure}[H]
    \centering
    \includegraphics[width=0.80\linewidth]{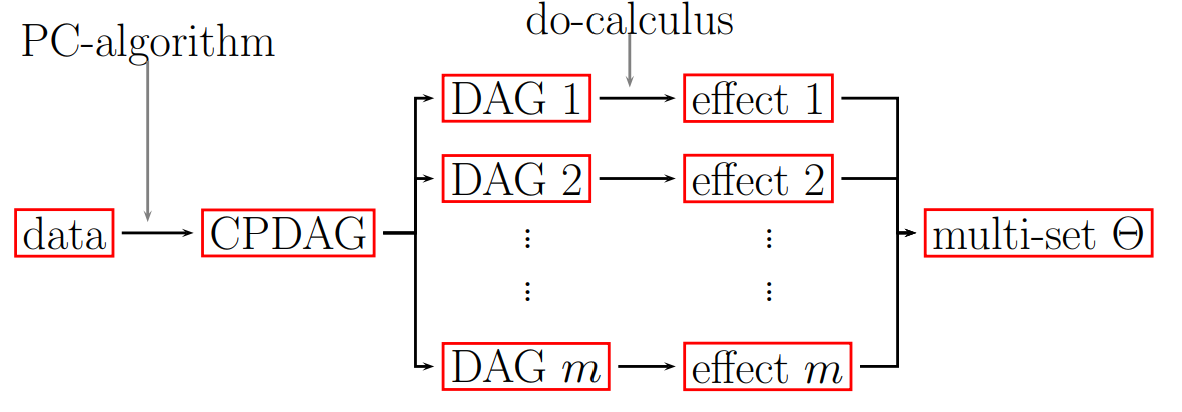}
    \caption{An illustration of IDA (adapted from \cite{maathuis2010predicting}).}
    \label{fig:IDA}
\end{figure}

\section{Methodology}
\subsection{Problem Setting}
To infuse causality into SHAP, we introduce Causal SHAP to attribute a
model prediction to its input features, while respecting known causal
relationships between variables.

Considering the following {\em feature attribution} problem: Given a
prediction model $f: \mathcal{X} \rightarrow \mathbb{R}$ where
$\mathcal{X} \subseteq \mathbb{R}^n$ and an instance $x = (x_1,
\ldots, x_n) \in \mathcal{X}$, derive the contribution of each feature
to the models prediction $f(x)$ (Definition~\ref{def:feat_attr}). 
\begin{definition}[Feature Attribution] 
Given an instance $x$, a feature attribution method $\zeta$ is a
mapping $\zeta(x;f):\mathcal{X}\rightarrow\mathbb{R}^n$, with each
$\phi_i=\zeta(x;f)_i$ quantifying the contribution of the feature $i$ to
$f(x)$. 
\label{def:feat_attr}
\end{definition}

\begin{figure*}
    \centering
    \includegraphics[width=0.75\linewidth]{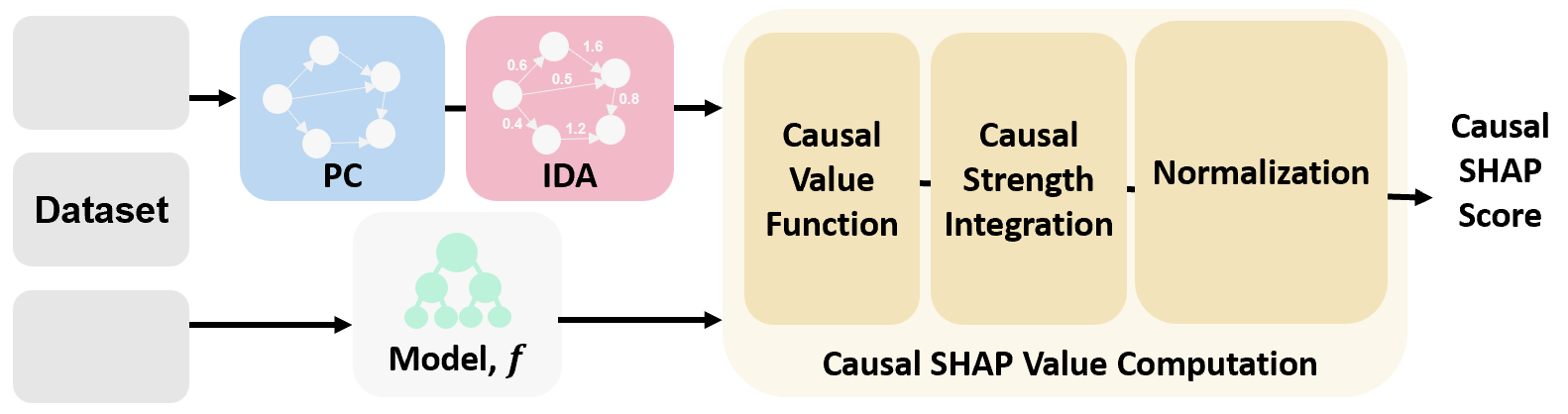}
    \caption{Causal SHAP Framework: A pipeline for integrating causality into SHAP feature attribution. The framework consists of three main components: (1) PC algorithm for discovering causal relationship between features. (2) IDA algorithm for estimating the strength of causal effect. (3) Causal SHAP value computation to incorporate both causal structure and strength information through Causal Value Function, Causal Strength Integration and followed by a normalization step to maintain additivity. This approach combines causal discovery with feature attribution while preserving SHAP's desirable properties.}
    \label{fig:causal SHAP}
\end{figure*}

In this work, we model the causal relationships between variables
using a DAG, $G = (V, E)$ where: $V = \{1, \ldots, n \}$ is the set of
features and $E \subseteq V \times V$ represents causal relationships
between features. For each feature $i \in V$, $\text{PA}_i = \{j: (j,
  i) \in E\}$ denotes its parent set and $W_i$ its total causal effect on the prediction target $Y$.
%  $W_i$ quantifies the total effect of feature $i$ on target variable $Y$ and
Note that $W_i$ is computed using causal effects between edges found with the IDA algorithm.

% Previously: The total causal effects $B_i$ estimated by IDA algorithm for each feature $i$, where: $B_i$ quantifies the total effect of $X_i$ on target variable $Y$ and $B_i$ is computed by considering all possible DAGs in the Markov equivalence class of $G$.

Our objective is to compute feature attributions $\{\phi_i\}_{i=1}^n$
that follow the SHAP approach,
%of decomposing the difference between the model prediction and its
%expected value,
while satisfying the key properties: local accuracy,
missingness and consistency (see Section~\ref{sec:theory} for their
definitions) and reflect both:
\begin{itemize}
     \item Direct effects: Causal strength through edges $(i, Y)$. % in $G$
     \item Indirect effects: Causal strength through paths $i \rightsquigarrow Y$.% in $G$
\end{itemize}
The key challenge lies in effectively computing these attributions
while accounting for the causal structure in $G$.
\subsection{Method}

We propose a two-step modification to SHAP to account for causality
in its feature attribution as illustrated in Figure~\ref{fig:causal SHAP}.
Firstly, we introduce a new value function ({\em Causal Value
  Function}) to better approximate the Shapley value function
(Equation~\ref{eqn:base_value}) than SHAP's interventional expectation
(Equation~\ref{eqn:SHAP_Interventional}) by preventing the generation
of impossible data points in its sampling-based calculation. Next, we
consider causal strength integration to assign different strengths to
different causal edges, effectively discounting feature attribution values with ``causal strength''. 
%Lastly, we normalise obtained feature
%attribution values to respect the additivity property in the original SHAP. 

%\paragraph{Causal Value Function}
The causal value function is inspired by causal intervention, in which one observes changes in the output while manipulating the input. In our
context, we incorporate Pearl's do-operator, which isolates the causal
effect of a subset of variables by fixing their values, and sampling the
other variables outside of the subset, with respect to the causal
graph. The difference between our method
(Eqn~\ref{eqn:causal_SHAP_value}) from the original SHAP
(Eqn~\ref{eqn:SHAP_Interventional}) is that we  
sample the out-of-coalition features with respect to the causal
relationship, hence prevent the generation of impossible data points
that would not happen in real-life. The causal relationship is based
on the causal graph computed using the PC algorithm:
\begin{equation}
\begin{split}
v_{\text{c}}(S) = \mathbb{E}\Big[f(X) \mid \text{do}(X_S = x_S),\\
\quad X_{\bar{S}} \sim \Pr(X_{\bar{S}} \mid \text{do}(X_S = x_S))\Big].
\end{split}
\label{eqn:causal_SHAP_value}
\end{equation}
Here, $do(X_S = x_S)$ represents Pearl's do-operator. $\bar{S} = \{i
  \in {1, \ldots, n} \mid i \notin S\}$ denotes the set of variables
not in $S$. $\Pr(X_{\bar{S}} \mid \text{do}(X_S = x_S))$ represents
the distribution of variables not in $S$, given the intervention on
variables in $S$. 

The distribution $\Pr(X_{\bar{S}} \mid \text{do}(X_S = x_S))$ is
determined by the causal mechanisms, specifically, it depends on
whether these variables have parents in the causal graph. For a
variable $X_i \in X_{\bar{S}}$ that has no parents in the causal
graph, its distribution remains unchanged: 
\begin{equation}
X_i \sim \Pr(X_i),
\end{equation}
where $\Pr(X_i)$ is the original marginal distribution of $X_i$. For a
variable $X_i \in X_{\bar{S}}$ that has parents $\text{PA}_i$ in the
causal graph, its distribution is conditioned on the values of its
parents: 
\begin{equation}
X_i \sim N(\mu_i,\sigma_i),
\end{equation}
where $\mu_i$ is estimated using linear regression of $X_i$ based on its parents $\text{PA}_i$, and $\sigma_i$ is estimated from the regression residuals. When computing the causal value function $v_{\text{c}}(S)$, we sample the values of variables in $X_{\bar{S}}$ from their respective distributions, as described above. This ensures that the values of variables not in $S$ are consistent with the causal relationships in the graph, given the interventions performed on variables in $S$.

%\paragraph{Causal Strength Integration}
The total causal effect $W_i$ for each feature $i$ on target $Y$ is computed by analyzing all possible paths in the IDA multi-set output from feature $i$ to $Y$. For a given path $p$, the path-specific effect $W_i^p$ on target $Y$ is calculated as the product of individual edge weights (mean causal effects) along that path:
\begin{equation}
W_i^p = \prod_{(j,k) \in p} w_{jk},
\end{equation}
where $w_{jk}$ represents the edge weight between nodes $j$ and $k$ retrieved from IDA algorithm.
The total causal effect $W_i$ for feature $i$ is then computed as the sum of effects across all paths:
\begin{equation}
W_i = \sum_{p \in P_i} W_i^p,
\end{equation}
where $P_i$ is the set of all simple paths from feature $i$ to the
target $Y$.\footnote{While PC produces a CPDAG, $P_i$ represents
  directed paths in DAGs from the IDA equivalence class, which are
  finite due to acyclicity of the graph.} Then, we integrate causal strength into
SHAP to ensure that it accounts for information from the causal
graph. The causal weight factor $\gamma_{i}$ is defined as: 
\begin{equation}
    \gamma_{i} = \frac{|W_i|}{\sum_{j \in N} |W_j|},
\end{equation}
where $W_i$ represents the total causal effect of feature $i$ on the
target variable, computed using the IDA algorithm
\cite{maathuis2010predicting}. Hence, our novel {\em Causal SHAP} value is
defined as: 
\begin{equation}
  \label{eqn:phi_i_c}
    \phi_i^{c} = \sum_{S \subseteq N \setminus \{i\}} w(S,i)[v_{c}(S \cup \{i\}) - v_{c}(S)],
\end{equation}
where
\begin{equation*}
    w(S,i) = \frac{|S|!(n-|S|-1)!}{n!}\gamma_{i}.
\end{equation*}

%\paragraph{Normalization}
Lastly, we normalize the causal SHAP values with:
\begin{equation}
    \phi_i^{n} = \phi_i^{c} \times \frac{f(x) - E[f(X)]}{\sum_{j=1}^n \phi_j^{c}}.
\end{equation}

\section{Theoretical Evaluation}
\label{sec:theory}

SHAP is known for its good properties: local accuracy, missingness, and
consistency. Together, they imply the four Shapley value axioms 
\cite{lundberg2017unified}.
\begin{itemize}
    \item Local Accuracy: The explanation for an instance should
      reflect the prediction probability for that instance as given by
      the predictor. 
    \item Missingness: Any instance missing a feature-value should
      assign a zero attribution value to that missing feature. 
    \item Consistency: For any two prediction models where one has a
      larger change in prediction with the removal of a feature than
      another. Then the resulting explanation for that feature should
      be larger.
\end{itemize}
We show that the proposed normalised Causal SHAP maintain these properties as follows, starting with the Local Accuracy

\begin{theorem}[Local Accuracy]
For any model $f$ and features $X$, the normalized causal SHAP values satisfy:
    \begin{equation}
        \sum_{i \in N} \phi_i^{\text{n}} = f(x) - \mathbb{E}[f(X)],
    \end{equation}
where $N$ is the set of all features in the causal graph.
\end{theorem}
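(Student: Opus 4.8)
The plan is to observe that this is a direct algebraic consequence of the normalization step, so the proof reduces to factoring a scalar out of a finite sum. The key structural fact is that the normalization multiplier
\[
C \;=\; \frac{f(x) - \mathbb{E}[f(X)]}{\sum_{j=1}^n \phi_j^{c}}
\]
does not depend on the index $i$; it is the same scalar in every term $\phi_i^{n} = \phi_i^{c}\,C$. First I would write out the target sum and pull this constant out front:
\[
\sum_{i \in N} \phi_i^{n} \;=\; \sum_{i \in N} \phi_i^{c}\,C \;=\; C \sum_{i \in N} \phi_i^{c}.
\]

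Next I would identify the remaining factor $\sum_{i \in N} \phi_i^{c}$ with the denominator appearing inside $C$. Since $N = \{1,\dots,n\}$, the index set of the outer sum coincides exactly with the summation range $j = 1,\dots,n$ in the denominator, so $\sum_{i\in N}\phi_i^{c} = \sum_{j=1}^n \phi_j^{c}$. Substituting the definition of $C$ then gives
\[
\sum_{i \in N} \phi_i^{n} \;=\; \frac{f(x) - \mathbb{E}[f(X)]}{\sum_{j=1}^n \phi_j^{c}} \cdot \sum_{j=1}^n \phi_j^{c} \;=\; f(x) - \mathbb{E}[f(X)],
\]
which is precisely the claimed local accuracy identity. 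Notably, the proof never needs to expand $\phi_i^{c}$ in terms of $v_c$, the coalition weights $w(S,i)$, or the causal factors $\gamma_i$: the cancellation is purely at the level of the normalization and is agnostic to how the unnormalized values were produced.

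The only real obstacle — and the point I would be careful to flag explicitly — is the well-definedness of the division, i.e. that $\sum_{j=1}^n \phi_j^{c} \neq 0$. If this quantity vanished, $C$ would be undefined and the normalized values (hence the theorem) would be meaningless, so I would state as a standing assumption that the total unnormalized causal attribution is nonzero (which holds generically whenever $f(x) \neq \mathbb{E}[f(X)]$, the nondegenerate case of interest). Given that caveat, the argument is a one-line cancellation, and I would present it as such rather than belaboring any routine computation.
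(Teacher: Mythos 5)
Your proof is correct and follows essentially the same route as the paper's: factor the index-independent normalization constant out of the sum, identify $\sum_{i \in N}\phi_i^{c}$ with the denominator $\sum_{j=1}^n \phi_j^{c}$, and cancel. Your explicit caveat that $\sum_{j=1}^n \phi_j^{c} \neq 0$ is needed for the normalization to be well defined is a point the paper's proof leaves implicit, and is a worthwhile addition.
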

\begin{proof}
By construction, our normalization step ensures:
\begin{align*}
    \sum_{i \in N} \phi_i^{\text{n}} &= \sum_{i \in N} \phi_i^{\text{c}} \times \frac{f(x) - \mathbb{E}[f(X)]}{\sum_{j=1}^n \phi_j^{\text{c}}} \\
    &= (f(x) - \mathbb{E}[f(X)]) \times \frac{\sum_{i \in N} \phi_i^{\text{c}}}{\sum_{j=1}^n \phi_j^{\text{c}}} \\
    &= f(x) - \mathbb{E}[f(X)].
\end{align*}
\end{proof}
\begin{theorem}[Missingness]
\label{thm:missingness}
Let $G = (V,E)$ be the causal graph and $i$ be a feature. If $i \notin
V$ or $i$ has no path to the target variable in $G$, then 
%    \begin{equation}
        $\phi_i^{\text{c}} = 0.$
%    \end{equation}
\end{theorem}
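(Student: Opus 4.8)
The plan is to exploit the fact that the causal weight factor $\gamma_i$ enters $\phi_i^c$ as a single multiplicative constant shared by every term of the coalition sum. Writing out $w(S,i) = \frac{|S|!(n-|S|-1)!}{n!}\gamma_i$ and pulling $\gamma_i$ outside the summation, I would rewrite
\begin{equation*}
\phi_i^c = \gamma_i \sum_{S \subseteq N \setminus \{i\}} \frac{|S|!(n-|S|-1)!}{n!}\big[v_c(S \cup \{i\}) - v_c(S)\big].
\end{equation*}
The bracketed sum is a finite real number regardless of the causal structure, so the whole expression vanishes as soon as $\gamma_i = 0$. Thus the theorem reduces entirely to establishing $\gamma_i = 0$ under either hypothesis, and none of the value-function differences need to be evaluated.

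Next I would trace $\gamma_i = |W_i| / \sum_{j \in N}|W_j|$ back to the total causal effect $W_i = \sum_{p \in P_i} W_i^p$, where $P_i$ is the set of simple directed paths from $i$ to the target $Y$. The key observation is that both hypotheses force $P_i = \emptyset$: if $i \notin V$ then $i$ is not even a node of $G$ and admits no path to $Y$, while if $i$ has no path to $Y$ by assumption then $P_i$ is empty by definition. An empty index set makes $W_i$ an empty sum, hence $W_i = 0$, so the numerator $|W_i|$ of $\gamma_i$ is zero and $\gamma_i = 0$. Combined with the factorization above, this yields $\phi_i^c = 0$, completing the argument.

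The hard part --- really the only subtlety --- is the well-definedness of $\gamma_i$ when the denominator $\sum_{j \in N}|W_j|$ could itself be zero, i.e. the degenerate case in which no feature has any causal path to $Y$ and $\gamma_i$ becomes $0/0$. I would address this by noting that the setting presumes the target has at least one causal ancestor among the features (otherwise the attribution problem is vacuous), so $\sum_{j \in N}|W_j| > 0$ and the ratio is genuinely zero rather than indeterminate; alternatively one can adopt the convention $\gamma_i = 0$ whenever $W_i = 0$, which makes the conclusion hold unconditionally. Everything else is a direct substitution, so I expect no further obstacles.
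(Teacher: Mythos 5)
Your proof is correct and takes essentially the same route as the paper's: both hypotheses force $W_i = 0$ (since $P_i = \emptyset$), hence $\gamma_i = 0$, hence $\phi_i^{\text{c}} = 0$ because $\gamma_i$ factors out of every term of the coalition sum. You are in fact more careful than the paper, which asserts the conclusion without spelling out the factorization and silently ignores the $0/0$ degeneracy of $\gamma_i$ when $\sum_{j \in N}|W_j| = 0$ --- your convention (or the nonvacuity assumption) cleanly closes that gap.
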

\begin{proof}
This follows from two cases. Firstly, if $i \notin V$, then by
definition $W_i = 0$ and thus $\gamma_{i} = \frac{|W_i|}{\sum_{j \in
    N} |W_j|} = 0$. Secondly, if $i$ has no path to the target, then
$|W_i| = 0$ as there is no causal effect, again leading to $\gamma_{i}
= 0$. In both cases, $\phi_i^{\text{c}} = 0$. 
\end{proof}
\begin{theorem}[Consistency]
\label{thm:consistency}
Let $f_x(z') = f(h_x(z'))$, where $h_x$ maps simplified inputs to
original inputs, and let $z' \setminus i$ denote setting $z'_i =
0$. For any two models $f$ and $f'$, if: 
    \begin{equation}
        f'_x(z') - f'_x(z' \setminus i) \geq f_x(z') - f_x(z' \setminus i)
    \end{equation}
holds for all $z' \in \{0,1\}^M$, then:
    \begin{equation}
        \phi_i^\text{c}(f', x) \geq \phi_i^\text{c}(f, x),
    \end{equation}
where $\phi_i^\text{c}(\cdot, x)$ denotes the Causal SHAP value for
feature $i$ at point $x$.
\end{theorem}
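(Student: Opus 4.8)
The plan is to exploit the fact that Causal SHAP factors cleanly into a model-independent causal weight and a Shapley-form term built from the causal value function. Writing
\[
\phi_i^{c}(f,x) = \gamma_i \sum_{S \subseteq N \setminus \{i\}} \frac{|S|!(n-|S|-1)!}{n!}\bigl[v_c^{f}(S \cup \{i\}) - v_c^{f}(S)\bigr],
\]
the first thing I would note is that the causal weight $\gamma_i = |W_i| / \sum_{j \in N}|W_j|$ is determined entirely by the causal graph $G$ and the IDA causal strengths, which are estimated from the data distribution and not from the predictor. Hence $\gamma_i$ is identical in $\phi_i^{c}(f,x)$ and $\phi_i^{c}(f',x)$, and $\gamma_i \geq 0$ because it is a ratio of absolute values. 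The Shapley coefficients $|S|!(n-|S|-1)!/n!$ are likewise nonnegative. So the whole argument reduces to showing that the bracketed marginal contributions of the causal value function are at least as large for $f'$ as for $f$.

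Next I would reduce to a single difference model by linearity. Because the sampling law $\Pr(X_{\bar S} \mid \mathrm{do}(X_S = x_S))$ is fixed by the causal graph and does not depend on the predictor, the causal value function $v_c^{f}(S)$ is a linear functional of $f$ (an expectation of $f$ against a model-independent distribution). Consequently $\phi_i^{c}(\cdot,x)$ is linear in its model argument, and it suffices to prove $\phi_i^{c}(g,x) \geq 0$ for $g = f' - f$. Rearranging the hypothesis gives, for every $z' \in \{0,1\}^M$,
\[
g_x(z') - g_x(z' \setminus i) = \bigl[f'_x(z') - f'_x(z' \setminus i)\bigr] - \bigl[f_x(z') - f_x(z' \setminus i)\bigr] \geq 0,
\]
so the marginal contribution of feature $i$ under the difference model is nonnegative for every simplified input.

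The remaining step is to translate this into value-function language. Using the standard SHAP identification $f_x(z') = v_c^{f}(S_{z'})$, where $S_{z'} = \{j : z'_j = 1\}$ is the coalition of present features and $z' \setminus i$ corresponds to removing $i$ from that coalition, the nonnegativity above becomes
\[
v_c^{g}(S \cup \{i\}) - v_c^{g}(S) \geq 0 \qquad \text{for all } S \subseteq N \setminus \{i\}.
\]
Multiplying each such nonnegative term by the nonnegative Shapley coefficient and by the nonnegative, model-independent $\gamma_i$, then summing, yields $\phi_i^{c}(g,x) \geq 0$, i.e.\ $\phi_i^{c}(f',x) \geq \phi_i^{c}(f,x)$, as required.

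I expect the main obstacle to be the identification used in the last step: making rigorous that the simplified-input map $h_x$ together with the do-operator and conditional sampling realizes exactly the causal value function $v_c$, so that marginal contributions in the $z'$-formulation coincide in sign with the $v_c$-differences. This hinges on the sampling distribution being shared by $f$ and $f'$; once that is confirmed, linearity of $v_c$ in $f$ and nonnegativity of all weights make the monotonicity conclusion immediate, exactly paralleling the classical Shapley consistency argument but with $v_c$ in place of the interventional value function.
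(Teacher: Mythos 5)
Your proposal is correct and takes essentially the same route as the paper's proof: both hinge on the model-independence of $\gamma_i$ (shared causal structure), the nonnegativity of the Shapley kernel times $\gamma_i$, and the hypothesis making every marginal difference $\Delta f'_x(S) - \Delta f_x(S)$ nonnegative; your difference-model reduction via $g = f' - f$ is just a repackaging of the paper's term-by-term subtraction, since linearity of $v_c$ in $f$ is what licenses both. If anything you are slightly more careful than the paper, which silently makes the same identification $f_x(z') = v_c(S_{z'})$ that you explicitly flag as the step needing justification.
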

\begin{proof}
  %The causal Shapley value for feature $i$ can be written as:
  From Eqn~\ref{eqn:phi_i_c}, we have
\begin{equation*}
    \phi_i^{\text{c}}(f, x) = \sum_{S \subseteq N \setminus \{i\}} w(S,i)[f_x(S \cup \{i\}) - f_x(S)],
\end{equation*}
where $w(S,i) = \frac{|S|!(M-|S|-1)!}{M!} \gamma_{i}$ and $\gamma_{i}
= \frac{|W_i|}{\sum_{j \in N} |W_j|}$ is determined by the total
causal effect $W_i$
%of feature $i$
on the target.
Let $\Delta f_x(S) = f_x(S \cup \{i\}) - f_x(S)$ denote the marginal contribution of feature $i$ given subset $S$. Since $f$ and $f'$ share the same causal structure, $\gamma_{i}$ remains constant. Then:
\begin{align*}
    & \phi_i^{\text{c}}(f', x) - \phi_i^{\text{c}}(f, x) \\
    & = \sum_{S \subseteq N \setminus \{i\}} w(S,i)[\Delta f'_x(S) - \Delta f_x(S)] \\
    & = \sum_{S \subseteq N \setminus \{i\}} w(S,i)[\Delta f'_x - \Delta f_x] \\
    & \geq 0.
\end{align*}
The final inequality follows from: $w(S,i) \geq 0$ for all $S$ since
both the Shapley kernel and $\gamma_{i}$ are non-negative, $\Delta
f'_x \geq \Delta f_x$ by the hypothesis of the theorem, the causal
structure (and thus $\gamma_{i,s}$) remains unchanged between $f$ and
$f'$. Therefore, $\phi_i^{\text{c}}(f', x) \geq \phi_i^{\text{c}}(f,
x)$. 
\end{proof}
\noindent
Note that although Theorems~\ref{thm:missingness} and \ref{thm:consistency} are shown with respect to $\phi_i^c$, it is easy to see that they hold for $\phi_i^n$ as well. 

\section{Algorithm and Experiments}
We provide a detailed algorithm for computing Causal SHAP as shown in
Algorithm~\ref{alg:one}.
\begin{algorithm}
\caption{Causal SHAP Value Computation. \label{alg:one}}
\begin{algorithmic}[1]
\REQUIRE 
\STATE $f: \mathcal{X} \rightarrow \mathbb{R}$ : trained model
\STATE $x \in \mathcal{X}$ : instance to explain
\STATE $G = (V,E)$ : causal graph from PC algorithm  
\STATE $\{W_i\}_{i=1}^n$ : total causal effects from IDA algorithm
\STATE $T$ : number of Monte Carlo samples
\ENSURE Normalized causal Shapley values $\{\phi_i^{n}\}_{i=1}^n$

\STATE Compute causal weight factors: $\gamma_i \leftarrow \frac{|W_i|}{\sum_{j=1}^n |W_j|}$ for $i \in \{1,...,n\}$
\STATE Initialize: $\phi_i^c \leftarrow 0$ for $i \in \{1,...,n\}$
\STATE $\mathbb{E}[f(X)] \leftarrow \frac{1}{m}\sum_{j=1}^m f(x_j)$ where $\{x_j\}_{j=1}^m$ is training data

\FOR{$t = 1$ to $T$}
    \STATE $S \leftarrow$ uniform random subset of $\{1,...,n\}$
    \FOR{$i \in \{1,...,n\} \setminus S$}
        \STATE $v_c(S)$ via Monte Carlo sampling on $G$
        \STATE $v_c(S \cup \{i\})$ via Monte Carlo sampling on $G$
        \STATE $w \leftarrow \frac{|S|!(n-|S|-1)!}{n!} \times \gamma_i$
        \STATE $\phi_i^c \leftarrow \phi_i^c + w(v_c(S \cup \{i\}) - v_c(S))$
    \ENDFOR
\ENDFOR

\STATE Normalize: $\phi_i^n \leftarrow \phi_i^c \times \frac{f(x) - \mathbb{E}[f(X)]}{\sum_{j=1}^n \phi_j^c}$ for $i \in \{1,...,n\}$
\RETURN $\{\phi_i^n\}_{i=1}^n$
\end{algorithmic}
\end{algorithm}
As computing Shapley value is expensive, sampling methods are
widely used.
%One key trade-off in computing causal SHAP
%values is between the potential benefit of capturing true causal
%relationships versus the practical challenges of computing them.
In our work, Monte Carlo sampling is used to approximate $v_{c}$ as follows:
\begin{equation}
    v_{c}(S) \approx \frac{1}{M}\sum_{m=1}^M f(x_S, \tilde{x}_{\bar{S}}^{(m)}),
    \label{eqn:MC}
\end{equation}
where $M$ is the number of Monte Carlo samples,
$\tilde{x}_{\bar{S}}^{(m)}$ is the $m$-th sample of non-intervened
features. Samples are generated through the following process. For
each feature $i \in \bar{S}$, in topological order: 
\begin{itemize}
    \item If feature has no parents: sample from its marginal distribution
    \item Otherwise: sample from $\mathcal{N}(\mu_i, \sigma_i)$ where: $\mu_i$ is the predicted mean from linear regression conditioned on parent values and $\sigma_i$ is estimated from regression residuals
\end{itemize}
In addition to using Monte Carlo, we use multiprocessing pool to
parallelize the Causal SHAP computation on AMD EPYC 7713 in our
experiments.

%\subsection{Experiments}

We evaluated our approach with four datasets (see
Table~\ref{table:dataset}). With the two synthetic datasets (one with
only direct causal effects and another one with both direct and indirect
causal effects), we assess how well each
method captures true causal relationships, using the known causal
graph as a reference (as the datasets are synthetic with known graph
ground truth). Secondly, SHAP is also applied to a reduced
feature set (a subset of the features that have a direct causal path
to the target variable, without being influenced by other features)
and served as a ground truth to evaluate how well each method captures
the true causal effect. For the two real-world datasets, we use the
{\em insertion score} \cite{petsiuk2018rise}, which is a standard
metric used for feature attribution evaluation in e.g.
\cite{covert2021explaining} and \cite{wang2020score} to evaluate the
performance.

For benchmarking, we compare our Causal SHAP method with 5 other
SHAP-derived methods: Independent SHAP\cite{lundberg2017unified},
Kernel SHAP\cite{lundberg2017unified}, On Manifold SHAP
\cite{frye2020shapley}, Shapley Flow\cite{wang2021shapley} and
Asymmetric Shapley Values (ASV) \cite{frye2020asymmetric}. Note that
other feature-attribution methods such as
LIME \cite{ribeiro2016should} are omitted due to our focus on
SHAP-derived methods.

\begin{table*}[ht]
\centering
\caption{Summary of datasets.\label{table:dataset}}
\begin{tabular}{lcccccc}
\toprule
\textbf{Dataset} & \textbf{\# Features} & \textbf{Train Size} & \textbf{Test Size} & \textbf{Problem} & \textbf{Data Type} \\
\midrule
Lung Cancer Risk & 4 & 800   & 200  & Regression & Synthetic \\
Cardiovascular Risk & 5 & 800 & 200 & Regression & Synthetic \\
Irritable bowel syndrome (IBS)   & 31  & 294 & 74  & Classification & Real-world \\
Colorectal Cancer  & 21 & 18321 & 4580 & Classification &  Real-world \\
\bottomrule
\end{tabular}
\end{table*}

\begin{table*}[h]
\centering
\caption{Comparison of SHAP Methods on Synthetic Dataset 1 \&
2. \textbf{Bold} values indicate the best performance
and \underline{underline} values indicate second best.}
\begin{scriptsize}
\begin{tabular}{|l|c|c|c|c|c|c|c|}
\hline
\textbf{Feature (Lung Cancer)} & \textbf{Ground Truth} & \textbf{Causal SHAP} & \textbf{Independent SHAP} & \textbf{Kernel SHAP} & \textbf{On Manifold} & \textbf{Shapley Flow} & \textbf{ASV} \\ \hline
Smoking  & 5.2171  & 5.3462     & 3.9467    & 3.2405  & 2.0104              & 2.1600             & 5.9366 \\ \hline
Stress & 0.2507    & 0.2600    & 0.1031  & 0.3620  & 1.8759              & 0.2600             & 1.9125 \\ \hline
Drink Coffee     & --   & 0.0000  & 1.8163    & 1.9514    & 1.9798         & 0.0000    & -0.0805 \\ \hline
\textbf{RMSE} & -- & \textbf{0.0167} & \underline{1.6357} & 3.9193   & 12.9241 & 8.8037   & 3.2792\\ \hline

\midrule 

\textbf{Feature (Cardiovascular Risk)} & \textbf{Ground Truth} & \textbf{Causal SHAP} & \textbf{Independent SHAP} & \textbf{Kernel SHAP} & \textbf{On Manifold} & \textbf{Shapley Flow} & \textbf{ASV} \\ \hline
Diet Score  & 0.5526 & 0.2397 & -0.5545 & -0.1910 & 1.3073 & 2.4000 & 0.0288 \\ \hline
Sleep Duration  & 3.6362 & 2.2416 & 0.1123 & 0.3247 & 1.1862 & 3.9300 & 0.0401 \\ \hline
BMI        & -- & 3.6056 & 4.1359 & 4.2996 & 1.3811 & 3.0900 & 4.9054 \\ \hline
Mental Health  & --  & 0.0000 & 1.3065 & 1.7032 & 1.3838 & 0.0000 & 0.2841 \\ \hline
\textbf{RMSE} & --   & \textbf{2.0422} & 13.6421 & 11.5176 & 6.5710 & \underline{3.4993} &  13.2048 \\ \hline
\end{tabular}
\end{scriptsize}
\label{tab:synthetic_shap_comparison}
\end{table*}

\subsubsection{Synthetic Datasets}
\paragraph{Lung Cancer}
Features in this dataset are \emph{smoking}, \emph{stress},
and \emph{drink\_coffee}, with \emph{lung\_cancer\_risk} as the target
variable. The \emph{stress} and \emph{smoking} are both generated using a
normal distribution $\mathcal{N}(5, 2)$. The relationships between the
variables are inspired by cancer
research \cite{zhou2021causal,zhang2020psychological,richards2015caffeine} and are defined as: 
\begin{align}
\label{eqn:lung_1}
    \text{lung\_cancer\_risk} &= 2 \cdot \text{smoking} +
    1.2 \cdot \text{stress} + \epsilon_{risk}, \\
\label{eqn:lung_2}    
    \text{drink\_coffee} &= 2 \cdot \text{smoking} + \text{stress} +  \epsilon_{coffee},
\end{align}
where $\epsilon_{risk} \sim \mathcal{N}(0, 3)$ and
$\epsilon_{coffee} \sim \mathcal{N}(0, 1)$ represent independent noise
terms sampled from normal distributions. The parameters in
Equations~\ref{eqn:lung_1} and \ref{eqn:lung_2} are chosen to generate
synthetic data that exhibits clear causal relationships, allowing us
to validate our causal discovery method by comparing the PC
algorithm's output against a known ground truth causal structure. The
causal graph successfully recovered by the PC algorithm from
this data set is shown in Figure~\ref{fig:graph_recovered}.

\begin{figure}[H]
    \centering
    \includegraphics[width=1\linewidth]{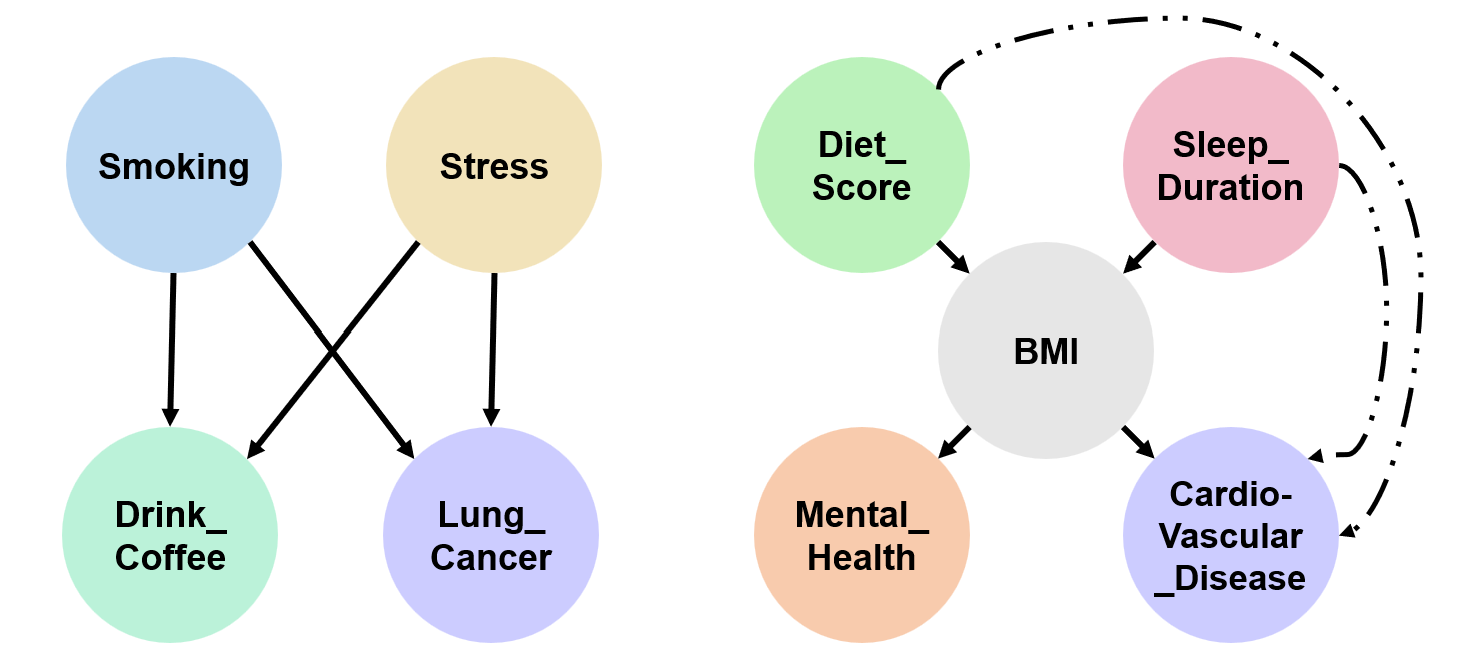}
    \caption{Diagram shows the causal graph of Lung Cancer Data set (Left), and the causal graph of Cardiovascular disease (Right). Solid arrow indicates causal edge with direct effect generated from PC algorithm, dashed lines represent the indirect effect. }
    \label{fig:graph_recovered}
\end{figure}

In the graph, \emph{smoking} and \emph{stress} act as confounders, influencing and creating a non-causal correlation between \emph{lung\_cancer\_risk} and \emph{drink\_coffee}. With this
design, we expect a causally-aware attribution method to assign
significant SHAP values to \emph{smoking} and \emph{stress}, as they
are direct causes, while assigning near-zero SHAP value
to \emph{drink\_coffee}, despite its correlation
with \emph{lung\_cancer\_risk}. This allows us to differentiate
between methods that only capture statistical associations versus
those that successfully incorporate causal relationships. 

In Figure~\ref{fig:graph_recovered}, we observe that Causal SHAP
attributes a zero SHAP score to \emph{drink\_coffee} in accordance with the causal graph. While Kernel SHAP misattributes significant scores
to \emph{drink\_coffee}, and fails to distinguish between causality and
correlation.

\begin{figure}[H]
    \centering
    \includegraphics[width=1\linewidth]{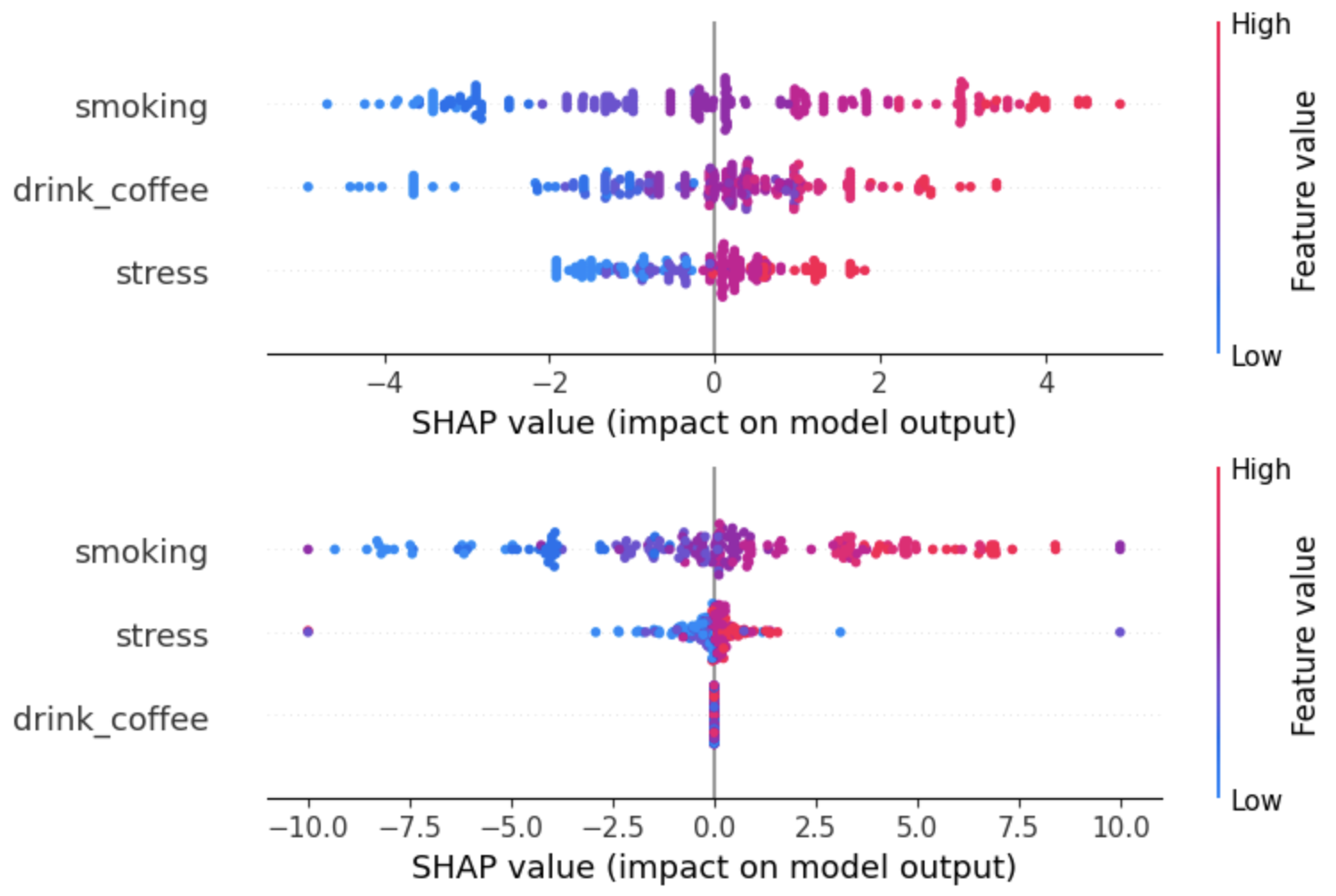}
    \caption{Kernel SHAP (top) versus Causal SHAP (bottom) for features in Lung Cancer Synthetic Dataset, \emph{drink\_coffee} in Causal SHAP is correctly attributed zero score as it is not affecting \emph{Lung\_Cancer\_Risk}}
    \label{fig:enter-label}
\end{figure}
Of all the methods listed in Table~\ref{tab:synthetic_shap_comparison}, Causal SHAP achieved the lowest Root Mean Squared Error (RMSE) against ground truth, best capturing the true causal effects. Independent SHAP, Kernel SHAP, and On Manifold SHAP failed to capture causality. Shapley Flow and ASV successfully give a near-zero score to $drink\_coffee$, but unfortunately have a larger discrepancy in score on other variables.

\paragraph{Cardiovascular Diseases}
This dataset has \emph{diet\_score}, \emph{sleep\_duration},
and \emph{family\_history} as input. The first is
randomly generated from a uniform distribution $\mathcal{U}(1,10)$, while the latter two come from $\mathcal{N}(8, 4)$ and
$\mathcal{N}(4, 2)$ distributions.
\begin{align}
\label{eqn:cardio_1}
    \text{bmi} &= 0.4 \cdot \text{diet\_score} +
    0.5 \cdot \text{sleep\_duration} + \epsilon_{bmi}, \\
\label{eqn:cardio_2}    
    \text{mental\_health} &= 1.5 \cdot \text{bmi}
    + \epsilon_{health}, \\
\label{eqn:cardio_3}    
    \text{cv\_risk} &= 1.5 \cdot \text{bmi} + \epsilon_{risk}, 
\end{align}
\begin{figure} [H]
    \centering
    \includegraphics[width=1\linewidth]{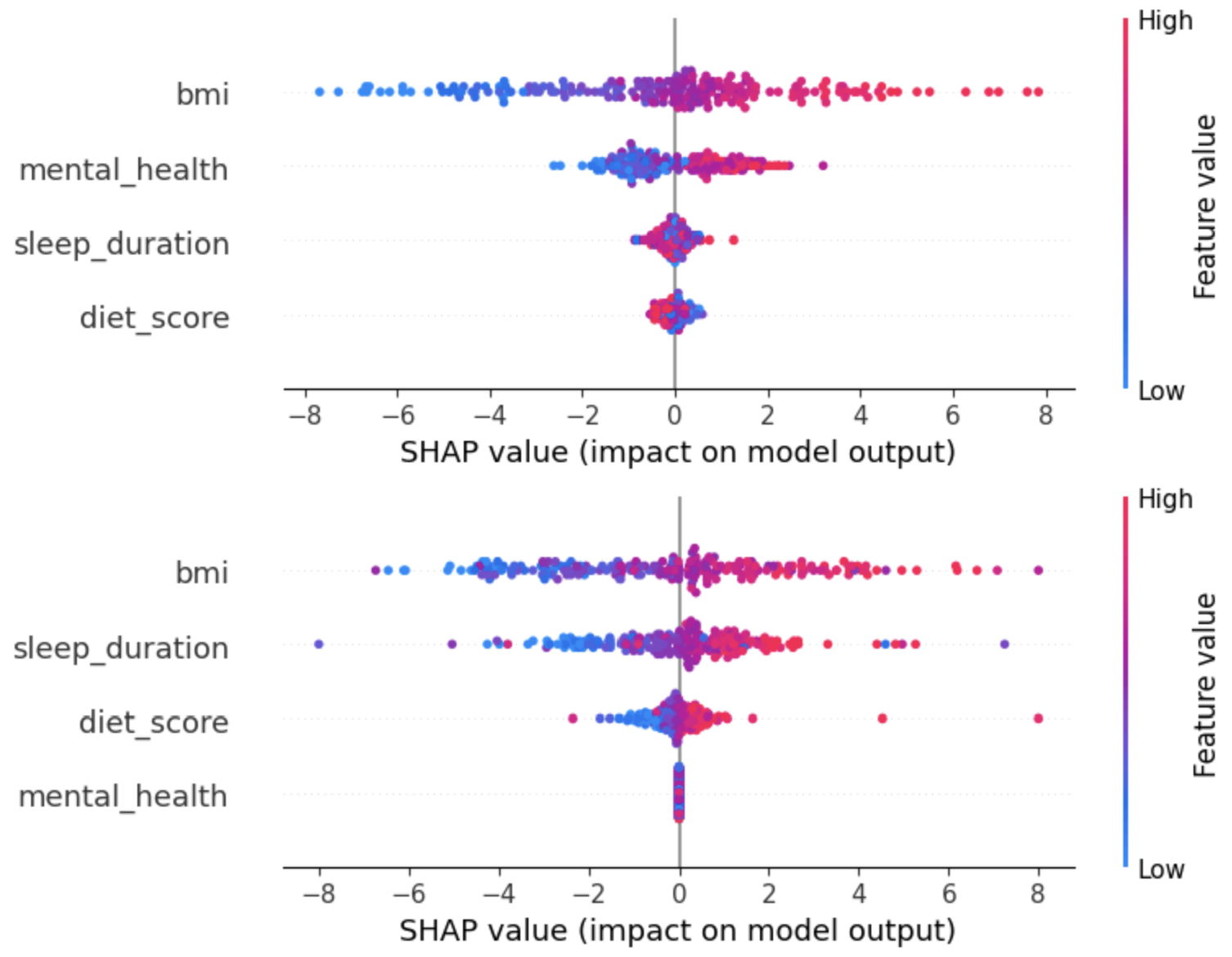}
    \caption{Kernel SHAP (top) versus Causal SHAP (bottom) for features in Cardiovascular Disease Synthetic Dataset, \emph{mental\_health} is correctly attributed zero score while \emph{diet\_score} and \emph{sleep\_duration} are assigned higher scores due to their indirect effect to \emph{CV\_risk}.}
    \label{fig:cardio_result}
\end{figure}
\noindent
where $\epsilon_{bmi} \sim \mathcal{N}(0,1)$,
    $\epsilon_{health} \sim \mathcal{N}(0,1)$ and
    $\epsilon_{risk} \sim \mathcal{N}(2,3)$ are independent noise 
    added to the features. The parameters in
    Eqns~\ref{eqn:cardio_1}-\ref{eqn:cardio_3} are inspired
    by \cite{wolongevicz2010diet}\cite{theorell2016sleep}\cite{luppino2010overweight}\cite{ortega2016obesity}
    and chosen to generate synthetic data with clear causal
    relationships. Figure~\ref{fig:graph_recovered} shows the causal graph recovered by the PC algorithm. 

Similar to the previous dataset, we have \emph{bmi} that acts as our
confounder. We expect a causally-aware attribution method to assign
significant SHAP values to \emph{bmi} as it is the direct cause, at
the same time assign a lower but non-zero SHAP values
to \emph{diet\_score} and \emph{sleep\_duration} as they are the
indirect causes. 

In Figure~\ref{fig:cardio_result}, Causal SHAP successfully attributes
zero score to \emph{mental\_health} and also provides a higher
importance to \emph{sleep\_duration} and \emph{diet\_score} because
these are the two factors that causally affect \emph{bmi}. While
Kernel SHAP failed to distinguish correlation and causation, it did
not manage to capture the causal links of \emph{sleep\_duration}
nor \emph{diet\_score} thus assigned near zero scores to them.  

Referring to Table 2, Causal SHAP achieved the lowest RMSE compared to Ground Truth, best capturing the true causal effects of \emph{diet\_score} and \emph{sleep\_duration}. Note that \emph{mental\_health} is omitted as it lacks a path to \emph{CV\_risk}, and \emph{bmi} is excluded because its causal relationship with two other variables, which may make it difficult
to interpret the Ground Truth values for BMI in terms of the true
causal effects. 

\subsubsection{Real-World Datasets}
We evaluated our method on two biomedical datasets: IBS \cite{jacobs2023multi} and Colorectal cancer datasets \cite{zhang2024apoe}, selected for their high complexity of causal links between metabolites, lifestyle factors, physiological measurements, and genetic information. 
We compared Causal SHAP against five baseline methods using the insertion test \cite{petsiuk2018rise}, which sequentially adds features from most to least important based on each method's attributions, measuring AUROC, Cross Entropy and Brier scores at each step. High AUROC, lower Cross Entropy and Brier scores indicate better performance. All experiments used the same Random Forest model as the black box.

As shown in Table~\ref{tab:main-results}, Causal SHAP achieved: 
\begin{itemize}
    \item On IBS dataset: Best AUROC (0.8594) and second-best Cross Entropy (0.4645) and Brier scores (0.1464)
    \item On Colorectal Cancer dataset: Best performance across all metrics (AUROC: 0.6271, Cross Entropy: 0.6735, Brier: 0.2397)
\end{itemize}

While Shapley Flow and ASV incorporate causal information via PC and IDA algorithms, Causal SHAP's superior performance suggests it leverages causal relationships more effectively. On Manifold performed strongly with IBS data, possibly because its dependency-agnostic approach works better when causal relationships are uncertain.

\begin{table}[!htbp]  
\footnotesize
\setlength{\tabcolsep}{2pt}
\begin{threeparttable}
\begin{minipage}{\textwidth}
\centering
\caption{Main Results: Comparison of Global Feature Attribution Methods}
\label{tab:main-results}
\begin{tabular}{@{}lccc@{}}
\toprule
& \multicolumn{3}{c}{Insertion Score} \\
\cmidrule(lr){2-4} 
& AUROC & Cross Entropy & Brier  \\
\midrule
IBS & & &\\
- Independent SHAP & 0.8527 ± 0.0326 & 0.4779 ± 0.0336 & 0.1517 ± 0.0136  \\
- Kernel SHAP & 0.8548 ± 0.0284 & 0.4674 ± 0.0360 & 0.1474 ± 0.0148  \\
- On Manifold & \underline{0.8589 ± 0.0244} & \textbf{0.4644 ± 0.0311} & \textbf{0.1461 ± 0.0132} \\
- Shapley Flow & 0.8274 ± 0.0359 & 0.5233 ± 0.0434 & 0.1706 ± 0.0178  \\
- ASV & 0.8225 ± 0.0356 & 0.5327 ± 0.0414 & 0.1756 ± 0.0174  \\
- Causal SHAP & \textbf{0.8594 ± 0.0227} & \underline{0.4645 ± 0.0314} & \underline{0.1464 ± 0.0134} \\
Colorectal Cancer & & \\
- Independent SHAP & 0.5886 ± 0.0337 & 0.6893 ± 0.0175 & 0.2475 ± 0.0080  \\
- Kernel SHAP & \underline{0.6263 ± 0.0149} & 0.6766 ± 0.0090 & 0.2410 ± 0.0040  \\
- On Manifold & 0.6243 ± 0.0132 & 0.6781 ± 0.0081 & 0.2416 ± 0.0035  \\
- Shapley Flow & 0.5868 ± 0.0262 & 0.6864 ± 0.0148 & 0.2463 ± 0.0069  \\
- ASV & 0.6177 ± 0.0170 & \underline{0.6750 ± 0.0087} & \underline{0.2406 ± 0.0039}  \\
- Causal SHAP & \textbf{0.6271 ± 0.0190} & \textbf{0.6735 ± 0.0096} & \textbf{0.2397 ± 0.0044} \\   
\midrule
\bottomrule
\end{tabular}
\small
\begin{tablenotes}
\item[1] All results are averaged over 5 runs with different random seeds. $\mu$ represents the mean and $\sigma$ represents the standard deviation.
\item[2] Bold values indicate the best performance and underline values indicate second best for each dataset and metric.
\end{tablenotes}
\end{minipage}
\end{threeparttable}
\end{table}

Computationally, we studied the impact of the hyperparameter $M$ in Equation~\ref{eqn:MC}. We find that AUROC is not sensitive to changes in $M$ once it exceeds 64. (e.g., IBS dataset: M=32 yields AUROC=0.5678; M=64 yields AUROC=0.5680). We also analyzed the computational cost of different components in our pipeline. To compute feature attributions for all IBS instances, the Causal SHAP Value Computation, PC, and IDA take 366.13, 0.26, and 1.56 seconds, respectively.
\section{Conclusion and Future Work}
In this paper, we introduced Causal SHAP, a comprehensive feature
attribution explainable AI method that considers causality between
input features. Causal SHAP leverages the PC algorithm for causal path
discovery and the IDA algorithm to estimate causal strength. Through
experiments on multiple synthetic and real-world datasets, we
demonstrated several key advantages of our method:
\begin{enumerate}
  \item
It effectively respects causal relationships, even in the presence of
highly correlated features.
\item
It achieves superior insertion scores on real-world datasets, even
when the true causal graph is unknown.
\item
  It preserves key theoretical properties of SHAP.
\end{enumerate}

We chose the PC algorithm due to its widespread use, under the
assumption that no hidden variables influence the
dataset. In future work, we aim to (1) 
address cases with hidden variables using the FCI algorithm
\cite{richardson2013discovery} for causal graph estimation; (2)
Explore Greedy Equivalence Search (GES)
\cite{meek1997graphical} when the PC algorithm becomes computationally
expensive as number of features scale; (3) Enhance our
framework to handle causal structure uncertainty in by aggregating
multiple possible causal graphs. (4) Develop more efficient
approximation algorithms for high-dimensional datasets with complex
dependency structures.

\section*{Acknowledgement}
This research is supported by the Ministry of Education, Singapore
(Grant IDs: RG22/23, RS15/23, LKCMedicine Start up Grant).

\bibliographystyle{IEEEtran}

\begin{thebibliography}{99}
\bibitem{vilone2021notions}Vilone, G. \& Longo, L. Notions of explainability and evaluation approaches for explainable artificial intelligence. {\em Information Fusion}. \textbf{76} pp. 89-106 (2021)
\bibitem{lundberg2017unified}Lundberg, S. A unified approach to interpreting model predictions. {\em ArXiv Preprint ArXiv:1705.07874}. (2017)
\bibitem{shapley1953value}Shapley, L. A value for n-person games. {\em Contribution To The Theory Of Games}. \textbf{2} (1953)
\bibitem{michalak2013efficient}Michalak, T., Aadithya, K., Szczepanski, P., Ravindran, B. \& Jennings, N. Efficient computation of the Shapley value for game-theoretic network centrality. {\em Journal Of Artificial Intelligence Research}. \textbf{46} pp. 607-650 (2013)
\bibitem{frye2020asymmetric}Frye, C., Rowat, C. \& Feige, I. Asymmetric shapley values: incorporating causal knowledge into model-agnostic explainability. {\em Advances In Neural Information Processing Systems}. \textbf{33} pp. 1229-1239 (2020)
\bibitem{spirtes2001causation}Spirtes, P., Glymour, C. \& Scheines, R. Causation, prediction, and search. (MIT press,2001)
\bibitem{maathuis2010predicting}Maathuis, M., Colombo, D., Kalisch, M. \& Bühlmann, P. Predicting causal effects in large-scale systems from observational data. {\em Nature Methods}. \textbf{7}, 247-248 (2010)
\bibitem{heskes2020causal}Heskes, T., Sijben, E., Bucur, I. \& Claassen, T. Causal shapley values: Exploiting causal knowledge to explain individual predictions of complex models. {\em Advances In Neural Information Processing Systems}. \textbf{33} pp. 4778-4789 (2020)
\bibitem{ribeiro2016should}Ribeiro, M., Singh, S. \& Guestrin, C. " Why should i trust you?" Explaining the predictions of any classifier. {\em Proceedings Of The 22nd ACM SIGKDD International Conference On Knowledge Discovery And Data Mining}. pp. 1135-1144 (2016)
\bibitem{wang2021shapley}Wang, J., Wiens, J. \& Lundberg, S. Shapley flow: A graph-based approach to interpreting model predictions. {\em International Conference On Artificial Intelligence And Statistics}. pp. 721-729 (2021)
\bibitem{janzing2020feature}Janzing, D., Minorics, L. \& Blöbaum, P. Feature relevance quantification in explainable AI: A causal problem. {\em International Conference On Artificial Intelligence And Statistics}. pp. 2907-2916 (2020)
\bibitem{aas2021explaining}Aas, K., Jullum, M. \& Løland, A. Explaining individual predictions when features are dependent: More accurate approximations to Shapley values. {\em Artificial Intelligence}. \textbf{298} pp. 103502 (2021)
\bibitem{lundberg2020local}Lundberg, S. \& Others From local explanations to global understanding with explainable AI for trees. {\em Nature Machine Intelligence}. \textbf{2}, 56-67 (2020)
\bibitem{amoukou2022accurate}Amoukou, S., Salaün, T. \& Brunel, N. Accurate shapley values for explaining tree-based models. {\em International Conference On Artificial Intelligence And Statistics}. pp. 2448-2465 (2022)
\bibitem{petsiuk2018rise}Petsiuk, V. Rise: Randomized Input Sampling for Explanation of black-box models. {\em ArXiv Preprint ArXiv:1806.07421}. (2018)
\bibitem{richardson2013discovery}Richardson, T. A discovery algorithm for directed cyclic graphs. {\em ArXiv Preprint ArXiv:1302.3599}. (2013)
\bibitem{meek1997graphical}Meek, C. Graphical Models: Selecting causal and statistical models. (Carnegie Mellon University,1997)
\bibitem{kalisch2012causal}Kalisch, M., Mächler, M., Colombo, D., Maathuis, M. \& Bühlmann, P. Causal inference using graphical models with the R package pcalg. {\em Journal Of Statistical Software}. \textbf{47} pp. 1-26 (2012)
\bibitem{frye2020shapley}Frye, C., Mijolla, D., Begley, T., Cowton, L., Stanley, M. \& Feige, I. Shapley explainability on the data manifold. {\em ArXiv Preprint ArXiv:2006.01272}. (2020)
\bibitem{jacobs2023multi}Jacobs, J. \& Others Multi-omics profiles of the intestinal microbiome in irritable bowel syndrome and its bowel habit subtypes. {\em Microbiome}. \textbf{11}, 5 (2023)
\bibitem{zhang2024apoe}Zhang, J. \& Others APOE genotype modifies the association between midlife adherence to the planetary healthy diet and cognitive function in later life among Chinese adults in Singapore. {\em The Journal Of Nutrition}. \textbf{154}, 252-260 (2024)
\bibitem{pearl2000models}Pearl, J. \& Others Models, reasoning and inference. {\em Cambridge, UK: CambridgeUniversityPress}. \textbf{19}, 3 (2000)
\bibitem{covert2021explaining}Covert, I., Lundberg, S. \& Lee, S. Explaining by removing: A unified framework for model explanation. {\em Journal Of Machine Learning Research}. \textbf{22}, 1-90 (2021)
\bibitem{wang2020score}Wang, H. \& Others Score-CAM: Score-weighted visual explanations for convolutional neural networks. {\em Proceedings Of The IEEE/CVF Conference On Computer Vision And Pattern Recognition Workshops}. pp. 24-25 (2020)
\bibitem{zhou2021causal}Zhou, W. \& Others Causal relationships between body mass index, smoking and lung cancer: univariable and multivariable Mendelian randomization. {\em International Journal Of Cancer}. \textbf{148}, 1077-1086 (2021)
\bibitem{hooker2019please}Hooker, G. \& Mentch, L. Please stop permuting features: An explanation and alternatives. {\em ArXiv Preprint ArXiv:1905.03151}. \textbf{2} pp. 1 (2019)
\bibitem{zhang2020psychological}Zhang, Y. \& Others Psychological stress enhances tumor growth and diminishes radiation response in preclinical model of lung cancer. {\em Radiotherapy And Oncology}. \textbf{146} pp. 126-135 (2020)
\bibitem{richards2015caffeine}Richards, G. \& Smith, A. Caffeine consumption and self-assessed stress, anxiety, and depression in secondary school children. {\em Journal Of Psychopharmacology}. \textbf{29}, 1236-1247 (2015)
\bibitem{wolongevicz2010diet}Wolongevicz, D. \& Others Diet quality and obesity in women: the Framingham Nutrition Studies. {\em British Journal Of Nutrition}. \textbf{103}, 1223-1229 (2010)
\bibitem{theorell2016sleep}Theorell-Haglöw, J. \& Lindberg, E. Sleep duration and obesity in adults: what are the connections?. {\em Current Obesity Reports}. \textbf{5} pp. 333-343 (2016)
\bibitem{luppino2010overweight}Luppino, F., Wit, L., Bouvy, P., Stijnen, T., Cuijpers, P., Penninx, B. \& Zitman, F. Overweight, obesity, and depression: a systematic review and meta-analysis of longitudinal studies. {\em Archives Of General Psychiatry}. \textbf{67}, 220-229 (2010)
\bibitem{ortega2016obesity}Ortega, F., Lavie, C. \& Blair, S. Obesity and cardiovascular disease. {\em Circulation Research}. \textbf{118}, 1752-1770 (2016)

\end{thebibliography}

\end{document}